\newtheorem{theorem}{Theorem}
\begin{document}

\title{Generative Adversarial Network {B}ased Autoencoder: Application to fault detection problem for closed-loop dynamical systems\thanks{This work was supported by the U.S. Department of Energy's Building Technologies Office through the Emerging Technologies, Sensors and Controls Program.}}

\author%
{%
I. Chakraborty$^{1,\diamond}$ ,
R. Chakraborty$^{2,\diamond}$ \and
D. Vrabie$^1$
\\
$^1$Optimization and Control Group\\
Pacific Northwest National Laboratory, Richland, WA, USA\\
$^2$CVGMI, University of Florida, FL, USA\\
\textsuperscript{1}{\tt\small \{indrasis.chakraborty, draguna.vrabie\}@pnnl.gov} ~~~~\textsuperscript{2}{\tt\small rudrasischa@gmail.com}\\
{\small $^\diamond$ These authors contributed equally to this work}
}

\maketitle

\begin{abstract}
The fault detection problem for closed-loop, uncertain dynamical systems is investigated in this paper, using different deep-learning based methods. The traditional classifier-based method does not perform well, because of the inherent difficulty of detecting system-level faults for a closed-loop dynamical system. Specifically, the acting controller in any closed-loop dynamical system works to reduce the effect of system-level faults. A novel generative-adversarial-based deep autoencoder is designed to classify data sets under normal and faulty operating conditions. This proposed network performs quite well when compared to any available classifier-based methods, and moreover, does not require labeled fault-incorporated data sets for training purposes. This network's performance is tested on a high-complexity building energy system data set.
\end{abstract}

\section{Introduction}
\label{intro}
Fault detection and isolation enables safe operation of critical dynamical systems, along with cost effective system performance and maximally effective control performance. For this reason, fault detection and isolation research  is of interest in many engineering areas, such as aerospace systems (e.g., \cite{patton1991fault,patton1992robust,patton1994review,zhang2015bayesian}), automotive systems (e.g., \cite{dixon2000fault,murray2002resolver,capriglione2003line,isermann2004model,mcintyre2004fault,hwang2010survey,flett2016fault}), photovoltaic systems (e.g., \cite{firth2010simple,chouder2010automatic,braun2012signal,zhao2012decision,yang2013benchmarking,silvestre2013automatic,garoudja2017statistical}),\nocite{chouder2010automatic,garoudja2017statistical,ropp1999prevention,yang2013benchmarking,firth2010simple,braun2012signal,zhao2012decision} and building heating and cooling systems (e.g., \cite{du2014fault,gao2016system}). 
For feedback-controlled dynamical systems subjected to exogenous disturbances, fault detection and isolation becomes challenging because the controller expends effort to compensate for the undesired effect of the fault. 

In this paper, we will focus on the fault detection problem. The objective will be to successfully distinguish data sets collected under faulty operating conditions from data sets representative of normal operating conditions. We will only investigate physical faults that affect the system dynamics. One can classify the approaches to fault detection problems based on the assumption regarding system dynamics, namely linear or nonlinear systems, and based on the use of a system model for fault detection, either model-driven or data-driven. Model-driven methods use a model for the dynamical system to detect the fault, whereas the data-driven methods do not make explicit use of a model of the physical system. Next we provide a brief overview of the available literature in all these categories.\par 

The fault detection problem for linear systems was first formulated in \cite{beard1971failure} and \cite{jones1973failure}. Both papers developed Luenberger-observer based approaches, where the observer gain matrix decouples the effects of different faults. The observer-based approach was extended in \cite{massoumnia1989failure} to include fault identification by solving the problem of residual generation by processing the inputs and outputs of the system. A model- and parameter-estimation based fault detection method is developed in \cite{frank1990fault}. An observer-based fault detection approach, where eigenstructure assignment provides robustness to the effects of exogenous disturbances, is demonstrated in \cite{patton1997observer}. Sliding-mode observers are used in \cite{edwards2000sliding} and \cite{tan2003sliding}, who also provide fault severity estimates. Isermann and Balle \cite{isermann1997trends} provide an overview of fault detection methods developed in the 1990s, including state and output observers, parity equations, bandpass filters, spectral analysis (fast Fourier transforms), and maximum-entropy estimation. 

For nonlinear systems, fault detection methods primarily use the concept of unknown input observability.  Controllability and observability Gramians for nonlinear systems are defined in \cite{hermann1977nonlinear}. De Persis and Isidori \cite{de2001geometric} develop a differential geometric method for fault detection and isolation. They use the concept of an unobservability subspace, based on the similar notion for linear systems (see \cite{isidori2013nonlinear}). The method guarantees the existence of a quotient subsystem of a given system space, which is only affected by the fault of interest. Martinelli \cite{martinelli2017nonlinear} develops a generalized algorithm to calculate the rank of the observable codistribution matrix (equivalent to the observability Gramian for linear systems) for nonlinear systems, and demonstrates its applicability for several practical examples, such as motion of a unicycle, a vehicle moving in three-dimensional space, and visual-inertial sensor fusion dynamics.\par

For a model-based fault detection problem, Maybeck et al. and Elgersma et al. used an assemble
of Kalman filters to match a particular fault pattern in \cite{maybeck1999multiple} and \cite{elgersma2001reconfigurable}, respectively. Boskovic et al. \cite{boskovic1999intelligent} and \cite{boskovic2001line} develop a multiple model method to detect and isolate actuator faults, using multiple hypothesis testing. In \cite{mcintyre2004fault}, a nonlinear observer-based fault identification method has been developed for a robot manipulator, which shows an asymptotic convergence of the fault observer to the actual fault value. Dixon et al. \cite{dixon2000fault} develop a torque filtering based fault isolation for a class of robotic manipulator systems. In \cite{bokor2009fault}, a model-based fault detection and identification approach is developed, by using a differential algebraic and residual generation method.\par 

Data-driven approaches such as \cite{ding2009subspace} and \cite{dong2011data}, use system data to identify the state-space matrices, without using any knowledge of system dynamics. 
In \cite{he2013least}, for a class of discrete time-varying networked systems with incomplete measurements, a least-squares filter paired with a residual matching (RM) approach is developed to isolate and estimate faults. This approach comprises several Kalman filters, with each filter designed to estimate the augment signal, composed of the system state and a specific fault signal, associated with it. An adaptive fault detection and diagnosis method is developed in \cite{lemos2013adaptive}, by implementing a clustering approach to detect faults. For incipient faults, Harmouche et al. in \cite{harmouche2014incipient} used a principal component analysis (PCA) framework to transform a data set with faulty operating conditions into either principal or residual subspaces. For nonlinear systems, although data-driven approaches are effective in many fault identification scenarios, the quality of fault detection greatly depends on the quality of available training data and the training data span. Zhang et al. \cite{zhang2015bayesian} proposed merging data-driven and model-based methods in a Bayesian framework. In \cite{bao2016improved}, sparse global-local preserving projections are used to extract sparse transformation vectors from given data set. The extracted sparse transformation is able to extract meaningful features from the data set, which results in a fault related feature extraction, as shown in \cite{bao2016improved}. 

Generative adversarial networks (GANs) were introduced in \cite{goodfellow2014generative} as data generative models in a zero-sum game framework. The training objective for a GAN is to increase the error rate of the discriminative network that was trained on an existing data set. Since their introduction, GANs have been used to augment machine learning techniques to do boosting of classification accuracy, generate samples, and detect fraud \cite{santurkar2017classification,tolstikhin2017adagan,arjovsky2017wasserstein,chongxuan2017triple,lin2017softmax,kilcher2017parametrizing,kodali2017train,zheng2018one,arici2016associative,saatci2017bayesian}. GAN has been proposed as an alternative to variational autoencoders \cite{44904,ulyanov2018takes}. Several research publications propose algorithms that can distinguish between ``true" samples and samples generated by GANs \cite{46638,46641,shen2017ae,samangouei2018defense}.  

The remainder of the paper is organized as follows. We provide a mathematical description of the fault detection problem along with the proposed approach in Section \ref{section2}. In Section \ref{sec_deep} we explain the architecture of an autoencoder and we propose a GAN to generate and classify data sets with normal and faulty operating conditions. A novel loss function, suitable for the proposed GAN based autoencoder network, is developed in Section \ref{sec_deep}.
In Section \ref{section7}, we first train and test a support vector machine (SVM) based classifier, on labeled data sets; (labeling is done based on both faulty and normal operating conditions). Subsequently, we demonstrate a way to improve the performance of the designed SVM, by training a GAN based autoencoder on a Gaussian random data set, which represents data sets with faulty operating conditions, for training the proposed GAN based network. In Section \ref{section7}, 
we show further improved performance of our proposed GAN based network architecture using a representative data set with faulty operating conditions generated by taking linear combinations of vectors that are orthogonal to the principal components of the normal data set space. Finally, we summarize our findings in Section \ref{conclusion}.

\section{The problem and the proposed method}
\label{section2}
\subsection{Problem description}
\label{prob_desc}

Figure \ref{fig:fig18} shows a schematic diagram of a closed-loop dynamical system. The dynamics of the system can be mathematically defined as
\begin{equation} \label{eq:system_dynamics}
\begin{split} 
\dot{x}=f(x,u,d) \\
y=g(x,u,d)
\end{split}
\end{equation}
\noindent where $x:[0,\infty)\rightarrow\mathbb{R}^{n}$ is the $n$-dimensional vector containing system states, $u:[0,\infty)\rightarrow\mathbb{R}^{m}$ is the $m$-dimensional vector containing control inputs, $d:[0,\infty]\rightarrow\mathbb{R}^{p}$ is the $p$-dimensional vector of exogenous disturbances, $f:\mathbb{R}^{n}\times\mathbb{R}^{m}\times\mathbb{R}^{p}\rightarrow\mathbb{R}^{n}$ is an unknown nonlinear mapping that represents the system dynamics, $y:[0,\infty]\rightarrow\mathbb{R}^{q}$ is a vector of  measurable system outputs, and $g:\mathbb{R}^{n}\times\mathbb{R}^{m}\times\mathbb{R}^{p}\rightarrow\mathbb{R}^{q}$ is an unknown nonlinear mapping that represents the relationship of input to output.
\begin{figure*}
\begin{centering}
\includegraphics[scale=0.40]{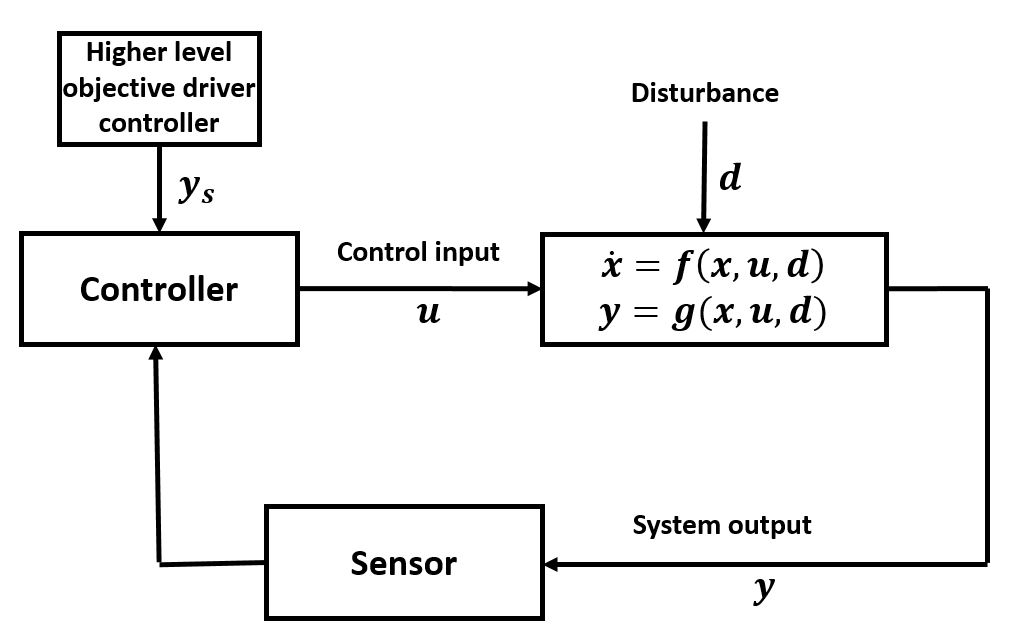}
\par\end{centering}
\caption{A closed-loop dynamical system.\label{fig:fig18}}
\end{figure*}

Now we define a fault detection problem for the dynamics in (\ref{eq:system_dynamics}) as follows. Given any data set $\mathcal{S}$ containing sample measurement pairs of $u$ and $y$, identify an unwanted change in the system dynamics. In order to further generalize the fault detection problem, we will only use the data set representing normal operating conditions. This restriction uses the fact that having a data set that incorporates faulty operating conditions indicates either having the capabilities of inserting system-level faults in the dynamics or having a known system dynamics $f$ (as in (\ref{eq:system_dynamics})). Developing either of these aforementioned capabilities involves manual labor and associated cost. We will further assume that the observable part of the system described in (\ref{eq:system_dynamics}) can be sufficiently identified from the available data set with normal operating conditions.

\subsection{Proposed method description}
\label{proposed_method}
For the fault detection problem described in Section \ref{prob_desc}, we develop a GAN based deep autoencoder, which uses data set with normal operating conditions (let us define this data space as $S_{0}$), to successfully identify the presence of faulty operating conditions in a given data set. In order to do that, we take the principal components of $S_{0}$ and use the orthogonals to those principal components to define a vector space $S_{1}$. Now, the training objective of our proposed GAN is to ``refine'' $S_{1}$, to calculate $S_{2}\subseteq S_{1}$, such that $S_{1}$ becomes a representative of the data set that contains system-level faulty operating conditions. The purpose of our deep autoencoder is to learn the data structure of $S_{0}$, by going through the process of encoding and decoding. Upon selecting an encoding dimension, we map the GAN generated space $S_{2}$ to the selected encoding dimension space. Let us designate the encoded representation of $S_{0}$ as $S_{0}^{*}$, and $S_{2}$ as $S_{2}^{*}$. Our final step is to design a classifier, which takes $S_{0}^{*}$ and $S_{2}^{*}$ for training. Furthermore, this entire training process, of both GAN and the deep autoencoder, is done simultaneously by defining a cumulative loss function. \par
In order to motivate the requirement of doing an orthogonal transformation on $S_{0}$ to define $S_{1}$, we demonstrate a case of defining $S_{1}$ using Gaussian random noises, and follow the aforementioned training process of the proposed network. Moreover, a single SVM based classifier is trained on labeled normal and fault-incorporated data sets, to compare performance with our proposed network for two different cases (orthogonal transformation and Gaussian-noise based prior selection).


\section{Proposed deep-learning based method}
\label{sec_deep}
\subsection{Overview of autoencoder networks}
\label{deep10}
Autoencoders are multilayer computational graphs used to learn a representation (encoding) for a set of data, for the purpose of dimensionality reduction or data compression. In other words, the training objective of our proposed deep autoencoder is to learn the underlying representation of a data set with normal operating conditions while going through the encoding transformations. A deep enough autoencoder, in theory, should be able to extract a latent representation signature from the training data, which can then be used to better distinguish normal and faulty operation. An autoencoder comprises two different transformations, namely encoding and decoding. The architecture of an autoencoder was first introduced and described by Bengio et al. in \cite{bengio2007greedy}. The encoder takes an input vector $\mathbf{x} \in\mathbf{R}^{d}$ and maps it to a hidden (encoded) representation $\mathbf{x}_{e}\in\mathbf{R}^{d^{\prime}}$, through a convolution of deterministic mappings. The decoder maps the resulting encoded expression into a reconstruction vector $\mathbf{x}^{\prime}$. We will use the notation $\mathcal{E}$ and $\mathcal{G}$ for the encoder and decoder of the autoencoder respectively.

Let the number of layers in the autoencoder network be $2n+1$, and let $\mathbf{y}_{i}$ denote the output for the network's $i^{th}$ layer. Then
\begin{gather}
\mathbf{y}_{0}=\mathbf{x}\nonumber, \mathbf{x}^{\prime}=\mathbf{y}_{2n+1} \\
\mathbf{y}_{i}=\sigma_{i}(\mathbf{w}_{i}^t \mathbf{y}_{i-1}+b_{i}) \nonumber,  \forall i\in\left[1,2n+1\right]. \label{deep1}
\end{gather}
Let $\boldsymbol{\theta}_{i}=\left\{\mathbf{w}_{i},b_{i}\right\}$ denote the parameters of the $i^{th}$ layer and $\sigma_{i}:\mathbf{R}\rightarrow\mathbf{R}$ be the activation function selected for each layer of the autoencoder. Let us also define $\boldsymbol{\theta}={\left\{\boldsymbol{\theta}_i\right\}}_{i=1}^{2n+1}$.

$\boldsymbol{\theta}$ defined for this autoencoder is optimized to minimize the average reconstruction error, given by
\begin{gather}
\label{training1}
\boldsymbol{\theta}=\mathrm{arg}\min_{\boldsymbol{\theta}}\frac{1}{m\sqrt{d}}\sum_{i=1}^{m}L(\mathbf{x}_{i},\mathbf{x}^{\prime}_{i})
\end{gather}
where $L$ is square of Euclidean distance, defined as $L(\mathbf{x},\mathbf{x}^{\prime})\triangleq \|\mathbf{x}-\mathbf{x}^{\prime}\|^{2}$, and $m\in\mathbb{N}$ is the number of available data points.

Our proposed autoencoder is trained on the normal data set, mentioned in Section \ref{dataset_building}. $90\%$ of the normal data (data span one year, with $5$ minute resolution) is used to train the autoencoder, and the rest $10\%$ is used for testing. Figure \ref{fig:fig1} shows both training and testing performance of our autoencoder with encoding dimension $100$, with increase in training epochs. Furthermore, selecting the proper encoding dimension is crucial for the following classifier to perform optimally. Figure \ref{fig:fig1} also demonstrates that the true positive accuracy rate from the classifier decreases when we decrease the encoding dimension. This signifies the loss of valuable information, if we keep decreasing the encoding dimension. For our application, we selected an encoding dimension of $100$.

In the next subsection, we give the formulation of our proposed generative model. Our generative model is in the spirit of the well-known GAN \cite{goodfellow2014generative}. Our proposed model will essentially generate samples that are not from the training data population. So clearly, unlike GAN, here the objective is not to fool the discriminator but to learn which samples are different. We will first formulate our proposed model and then comment on the relationship of our model with GAN in detail.

\subsection{Our proposed generative model}
We will use $\mathbf{x}_1$ to denote a sample of the data from the normal class, i.e., the class for which the training data is given. Let $p_{\text{data}}$ be the distribution of the normal class. We will denote a sample from the abnormal class by $\mathbf{x}_2$. In our setting, the distribution of the abnormal class is unknown, because the training data does not have any samples from the abnormal class. Our generative model will generate sample $\mathbf{x}_2$ from the unknown distribution, $p_{\text{noise}}$. {\it Note, here we will use the terminology ``data'' and ``noise'' to denote the normal and abnormal samples}. Let $p_z$ be the prior of the noise in the encoding space, i.e., $\mathbf{x}_2 \sim p_{\text{noise}} = \mathcal{G}(p_z)$. Furthermore, let $\mathcal{D}$ be the discriminator (a multilayer perceptron for binary classification), such that
\begin{align*}
\mathcal{D}(\mathbf{x})=\left\{\begin{array}{lr}
        1, & \text{if } \mathbf{x} \sim p_{\text{data}}\\
        0, & \text{if } \mathbf{x} \sim p_{\text{noise}}
        \end{array}\right.
\end{align*}
We will solve for $\mathcal{E}$, $\mathcal{G}$, and $\mathcal{D}$ in a maximization problem with the error function $V$ as follows:
\begin{align}
\label{deep:eq1}
V\left(\mathcal{D},\mathcal{E},\mathcal{G}\right) = \mathbf{E}_{\mathbf{x}_1\sim p_{\text{data}}} \left(\left(1-L(\mathbf{x}_1, \mathcal{G}(\mathcal{E}(\mathbf{x}_1)))\right) \right. \nonumber \\ + 
\left. \log(\mathcal{D}(\mathbf{x}_1))\right) + \mathbf{E}_{\mathbf{z}\sim p_z} \log(1-\mathcal{D}(\mathcal{G}(\mathbf{z})))
\end{align}
Note that here, $L$ is as defined in Eq. \ref{training1}. Furthermore, $L$ is normalized in $[0,1]$. 
Now, we will state and prove some theorems about the optimality of the solutions for the error function $V$.
\begin{theorem}
\label{deep:thm1}
For fixed $\mathcal{G}$ and $\mathcal{E}$, the optimal $\mathcal{D}$ is
\begin{align}
\label{deep:eq2}
\mathcal{D}^*(\mathbf{x}) = \frac{p_{\text{data}}(\mathbf{x})}{p_{\text{data}}(\mathbf{x})+p_{\text{noise}}(\mathbf{x})}
\end{align}
\end{theorem}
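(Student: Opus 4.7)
The plan is to follow the standard GAN optimality argument, adapted to this error function. First I would observe that, for fixed $\mathcal{G}$ and $\mathcal{E}$, the term $\mathbf{E}_{\mathbf{x}_1\sim p_{\text{data}}}[1-L(\mathbf{x}_1,\mathcal{G}(\mathcal{E}(\mathbf{x}_1)))]$ in $V$ does not involve $\mathcal{D}$ at all, so it may be dropped from the optimization. The residual problem is to maximize
\begin{equation*}
\mathbf{E}_{\mathbf{x}_1\sim p_{\text{data}}}\log \mathcal{D}(\mathbf{x}_1) + \mathbf{E}_{\mathbf{z}\sim p_z}\log\bigl(1-\mathcal{D}(\mathcal{G}(\mathbf{z}))\bigr)
\end{equation*}
over all measurable $\mathcal{D}:\mathbb{R}^d\to[0,1]$.

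Next I would convert the expectation over $\mathbf{z}$ into an expectation over $\mathbf{x}_2\sim p_{\text{noise}}$, using the fact given in the setup that $\mathbf{x}_2=\mathcal{G}(\mathbf{z})$ has density $p_{\text{noise}}=\mathcal{G}(p_z)$ (i.e.\ the pushforward of $p_z$ under $\mathcal{G}$). After this change of variables the objective becomes a single integral,
\begin{equation*}
\int \Bigl[p_{\text{data}}(\mathbf{x})\log \mathcal{D}(\mathbf{x}) + p_{\text{noise}}(\mathbf{x})\log\bigl(1-\mathcal{D}(\mathbf{x})\bigr)\Bigr]\,d\mathbf{x}.
\end{equation*}

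Because the integrand depends on $\mathcal{D}$ only through its value at $\mathbf{x}$, the maximization is pointwise: for each fixed $\mathbf{x}$ in the support of $p_{\text{data}}+p_{\text{noise}}$, set $a=p_{\text{data}}(\mathbf{x})$, $b=p_{\text{noise}}(\mathbf{x})$ and maximize $y\mapsto a\log y + b\log(1-y)$ on $(0,1)$. Elementary calculus (first derivative $a/y - b/(1-y)$ vanishes at $y^*=a/(a+b)$, second derivative strictly negative) shows the unique maximizer is $y^*=a/(a+b)$, which yields the claimed expression for $\mathcal{D}^*(\mathbf{x})$.

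The main obstacle, and the only subtle point, is handling the supports cleanly: $\mathcal{D}^*$ is only well-defined where $p_{\text{data}}+p_{\text{noise}}>0$, and the pointwise-maximization step tacitly requires that $\mathcal{D}$ range over an unconstrained function class so that the sup and the integral can be exchanged. I would address this by restricting attention to measurable $\mathcal{D}$ with values in $[0,1]$, defining $\mathcal{D}^*$ arbitrarily off the joint support (the objective is unaffected there), and invoking the standard argument that pointwise maximization of a nonnegative-weighted integrand achieves the supremum of the integral. The change-of-variables step is also where one implicitly uses that $\mathcal{G}(p_z)$ admits a density $p_{\text{noise}}$; this is assumed in the problem setup, so no further work is needed.
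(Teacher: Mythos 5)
Your proposal is correct and follows essentially the same route as the paper's proof: drop the reconstruction term (which is independent of $\mathcal{D}$), rewrite the remaining two expectations as a single integral against $p_{\text{data}}$ and $p_{\text{noise}}$, and maximize pointwise to obtain $\mathcal{D}^*(\mathbf{x}) = p_{\text{data}}(\mathbf{x})/(p_{\text{data}}(\mathbf{x})+p_{\text{noise}}(\mathbf{x}))$. You additionally spell out the elementary calculus and the support/measurability caveats that the paper leaves implicit, which only strengthens the argument.
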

\begin{proof}
Given $\mathcal{G}$ and $\mathcal{E}$, $V\left(\mathcal{D},\mathcal{E},\mathcal{G}\right)$ can be written as:
\begin{align*}
\bar{V}\left(\mathcal{D}\right) &= \mathbf{E}_{\mathbf{x}_1\sim p_{\text{data}}} \left(\log(\mathcal{D}(\mathbf{x}_1))\right) + \mathbf{E}_{\mathbf{z}\sim p_z} \log(1-\mathcal{D}(\mathcal{G}(\mathbf{z}))) \\
&= \int_{\mathbf{x}} \left(p_{\text{data}}(\mathbf{x}) \log(\mathcal{D}(\mathbf{x}))+p_{\text{noise}}(\mathbf{x}) \log(1-\mathcal{D}(\mathbf{x}))\right)
\end{align*}
The above function achieves the maximum at $\mathcal{D}^*(\mathbf{x}) = \frac{p_{\text{data}}(\mathbf{x})}{p_{\text{data}}(\mathbf{x})+p_{\text{noise}}(\mathbf{x})}$.
\end{proof}
\begin{theorem}
\label{deep:thm2}
With $\mathcal{D}^*$ and fixed $\mathcal{E}$, the optimal $\mathcal{G}$ is attained when $\mathbf{x}_1 \sim p_{\text{data}}$ and $\mathbf{x}_2 \sim p_{\text{noise}}$ has zero mutual information.
\end{theorem}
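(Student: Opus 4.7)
The plan is to mirror the classical GAN argument but with the sign reversed, since here $V$ is being maximized and contains an extra additive reconstruction term. First I would substitute the optimal $\mathcal{D}^*$ from Theorem \ref{deep:thm1} into the expression for $V$ in Eq.~\ref{deep:eq1}. The term $\mathbf{E}_{\mathbf{x}_1 \sim p_{\text{data}}}[1 - L(\mathbf{x}_1, \mathcal{G}(\mathcal{E}(\mathbf{x}_1)))]$ depends on $\mathcal{G}$ only through the reconstruction of encoded normal samples; with $\mathcal{E}$ fixed, it can be made equal to $1$ by any decoder that inverts $\mathcal{E}$ on $\operatorname{supp}(p_{\text{data}})$. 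I can therefore separate this reconstruction piece from the two expectations that actually constrain the abnormal distribution $p_{\text{noise}} = \mathcal{G}(p_z)$.

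Next I would show that the remaining $\mathcal{D}^*$-dependent terms equal
\begin{equation*}
\mathbf{E}_{\mathbf{x}\sim p_{\text{data}}} \log \frac{p_{\text{data}}(\mathbf{x})}{p_{\text{data}}(\mathbf{x})+p_{\text{noise}}(\mathbf{x})} + \mathbf{E}_{\mathbf{x}\sim p_{\text{noise}}} \log \frac{p_{\text{noise}}(\mathbf{x})}{p_{\text{data}}(\mathbf{x})+p_{\text{noise}}(\mathbf{x})},
\end{equation*}
which, by the usual Jensen--Shannon algebra, simplifies to $-\log 4 + 2\,\mathrm{JSD}(p_{\text{data}} \,\|\, p_{\text{noise}})$. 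Since $V$ is being maximized and $\mathrm{JSD}$ is upper-bounded by $\log 2$, with equality if and only if $p_{\text{data}}$ and $p_{\text{noise}}$ are mutually singular, the optimal $\mathcal{G}$ must induce a $p_{\text{noise}}$ whose support is disjoint from that of $p_{\text{data}}$.

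Finally I would interpret this disjoint-support condition as the ``zero mutual information'' statement of the theorem. Introducing a hidden class label $C\in\{1,2\}$ with $\mathbf{x}\mid C=1 \sim p_{\text{data}}$ and $\mathbf{x}\mid C=2 \sim p_{\text{noise}}$, disjointness of supports is equivalent to $C$ being a deterministic function of $\mathbf{x}$; consequently a sample from one distribution shares no information with a sample from the other, which is the natural reading of the claim.

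The main obstacle I anticipate is reconciling the two $\mathcal{G}$-dependent pieces simultaneously: the reconstruction term pulls $\mathcal{G}$ to map $\mathcal{E}(\operatorname{supp}(p_{\text{data}}))$ back onto $\operatorname{supp}(p_{\text{data}})$, while the JSD term requires $\mathcal{G}(p_z)$ to avoid that same support. Showing that both objectives can be attained together requires a (near-)singularity argument between the encoder pushforward of $p_{\text{data}}$ and the latent prior $p_z$, so that the preimages under $\mathcal{G}$ of the data support and of the noise support are essentially disjoint in the latent space. That feasibility step is the part of the argument that needs the most care.
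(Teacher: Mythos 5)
Your proof is correct in substance but follows a genuinely different route from the paper's own argument. After substituting $\mathcal{D}^*$ and noting that the reconstruction term is separately maximized, the paper directly identifies the two log-discriminator expectations with the quantity $H(\mathbf{x}_1,\mathbf{x}_2)-H(\mathbf{x}_1)-H(\mathbf{x}_2)$ (the negative of the mutual information, though the paper calls it the mutual information itself) and concludes that $V$ is maximized iff this equals zero; no Jensen--Shannon computation appears, and the identification of $\mathbf{E}_{\mathbf{x}_1\sim p_{\text{data}}}\log\mathcal{D}^*(\mathbf{x}_1)+\mathbf{E}_{\mathbf{z}\sim p_z}\log(1-\mathcal{D}^*(\mathcal{G}(\mathbf{z})))$ with a joint-entropy expression is asserted rather than derived. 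Your route --- reducing those terms to $-\log 4 + 2\,\mathrm{JSD}(p_{\text{data}}\,\|\,p_{\text{noise}})$ and maximizing the JSD to force mutual singularity --- is the standard GAN calculation with the optimization direction reversed, and it is the more defensible derivation; your closing step, translating disjoint supports into the theorem's ``zero mutual information'' phrasing via a latent class label, supplies an interpretation the paper leaves implicit (indeed, under your reading the sample--label mutual information is \emph{maximal}, which exposes how loosely the theorem's wording maps onto the actual optimality condition). You also flag a real issue the paper passes over silently: the reconstruction term wants $\mathcal{G}$ to map $\mathcal{E}(\mathrm{supp}(p_{\text{data}}))$ back onto $\mathrm{supp}(p_{\text{data}})$ while the divergence term wants $\mathcal{G}(p_z)$ to avoid that support, so joint attainability requires the encoder pushforward of $p_{\text{data}}$ and the prior $p_z$ to be essentially disjoint in the latent space; the paper simply maximizes the two pieces independently. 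In short, your argument buys rigor and an explicit feasibility caveat at the cost of length, while the paper's buys brevity at the cost of an unjustified entropy identity.
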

\begin{proof}
Observe, from Eq. \ref{deep:eq1}, the first term is maximized if and only if the loss, $L$, is zero. Hence, when $\mathcal{D}=\mathcal{D}^*$ and $\mathcal{E}$ are fixed, the objective function, $V$ reduces to
\begin{align*}
\mathbf{E}_{\mathbf{x}_1\sim p_{\text{data}}} \left(1-L(\mathbf{x}_1, \mathcal{G}(\mathcal{E}(\mathbf{x}_1)))\right) + H(\mathbf{x}_1,\mathbf{x}_2) \\
- H(\mathbf{x}_1)-H(\mathbf{x}_2)
\end{align*}
The first term goes to zero, $1$, 
when the reconstruction is perfect, then, the remaining term is maximized {\it iff}, 
$$
H(\mathbf{x}_1,\mathbf{x}_2) - H(\mathbf{x}_1)-H(\mathbf{x}_2) = 0
$$
where $H(.)$ and $H(.,.)$ denote the marginal and joint entropies,  respectively. Note that, the LHS of the above expression is the mutual information, which is denoted by $I(\mathbf{x}_1, \mathbf{x}_2)$. Hence, the claim holds.
\end{proof}
Theorem \ref{deep:thm2} signifies that $\mathbf{x}_1 \sim p_{\text{data}}$ and $\mathbf{x}_2 \sim p_{\text{noise}}$ have zero mutual information, i.e., the distributions $p_{\text{data}}$ and $p_{\text{noise}}$ are completely uncorrelated. This is exactly what we intend to get, i.e., we want to generate abnormal samples that are completely different from the normal samples (training data). 
Now, we will talk about how to choose the prior $p_z$ after commenting on the contrast of our proposed formulation with \cite{goodfellow2014generative}. In GAN \cite{goodfellow2014generative}, the generator is essentially mimicking the data distribution to fool the discriminator. On the contrary, because our problem requires that samples be generated from outside training data, our proposed generator generates samples outside the data distribution. Note that one can choose the Wasserstein loss function in Eq. \ref{deep:eq1} similar to \cite{arjovsky2017wasserstein}. Below we will mention some of the important characteristics of our proposed model.
\begin{itemize}
\item Though we have called it a GAN based autoencoder, clearly the decoder $\mathcal{G}$ is generating the samples and hence acts as a generator in GAN.
\item In Equation \ref{deep:eq1}, on samples drawn from $p_{\text{data}}$, autoencoder (i.e., both encoder and decoder) acts, i.e., $\mathcal{G}(\mathcal{E}(\mathbf{x}_1))$ should be very closed 
to  $\mathbf{x}_1$, when $\mathbf{x}_1 \sim p_{\text{data}}$. On the contrary, on $\mathbf{z} \sim p_z$, only the decoder (generator $\mathcal{G}$) acts. 
Thus, the encoder is learned
only from $p_{\text{data}}$, while the decoder (generator) is learned 
from both $p_{\text{data}}$ and $p_z$. 
\item Unlike GAN, here we do not have a two player min max game,
instead we have a maximization problem over all the unknown parameters. Intuitively, this can be justified, because we are not generating counterfeit samples. 
\end{itemize}
\subsubsection{How to choose prior $p_z$}
\label{prior}
If we do not know anything about the structure of the data, i.e., about $p_{\text{data}}$, an obvious choice of prior for $p_z$ is a uniform prior. In this work, we have used PCA to extract the inherent lower-dimension subspace containing the data (or most of the data). This is essential not only for the selection of $p_z$ but for the selection of the encoding dimension as well. By the construction of our proposed formulation, the support of $p_z$ should be in the encoding dimension, i.e., in $\mathbf{R}^{d^{\prime}}$. Given the data, we will choose $d^{\prime}$ to be the number of principal directions along which the data has $>90\%$ variance. The span of these $d^{\prime}$ bases will give a point, $\mathcal{S}$, on the Grassmannian $\text{Gr}(d^{\prime},d)$, i.e., the manifold of $d^{\prime}$ dimensional subspaces in $\mathbf{R}^{d}$. The PCA suggests that ``most of the data'' lies on $\mathcal{S}$. In order to make sure that the generator generates $p_{\text{noise}}$ different from $p_{\text{data}}$, we will use the prior $p_z$ as follows.

Let $\mathcal{N} \in \text{Gr}(d^{\prime},d)$ be such that $\mathcal{N} \neq \mathcal{S}$. Let $\left\{\mathbf{n}_i\right\}_{i=1}^{d^{\prime}}$ be the bases of $\mathcal{N}$. We will say a sample $\mathbf{z} \sim p_z$ if $\mathbf{z}_i = \mathbf{x}^t\mathbf{n}_i$, for all $i$, for some $\mathbf{x} \sim p_{\text{data}}$. Without any loss of generality, assume $2d^{\prime}>d$; then, we can select the first $d-d^{\prime}$ $\mathbf{n}_i$s to be orthogonal to $\mathcal{S}$ (this can be computed by using Gram-Schmidt orthogonalization). The remaining $\left\{\mathbf{n}_i\right\}$s we will select from the bases of $\mathcal{S}$. 

\section{Results}
\label{section7}
In this section, we will present experimental validation of our proposed GAN based model. Recall that in our setting, we have only the ``normal'' samples in the training set and both ``normal'' and ``faulty'' samples in the testing set. In the training phase, we will use our proposed GAN based framework to generate samples from the population that are uncorrelated to the normal population. We will teach a  discriminator to do so. 
Then, in the testing phase, we will show that our trained discriminator can distinguish ``normal'' from ``faulty'' samples with high prediction accuracy. Furthermore, we will also show that using the prior, as suggested in Section \ref{deep10}, gives better prediction accuracy than the Gaussian prior.

\subsection{Dataset}
\label{dataset_building}
We use simulation data from a high-fidelity building energy system emulator. This emulator captures the building thermal dynamics, the performance of the building heating, ventilation, and air conditioning (HVAC), as well as the building control system. The control sequences that drive operation of the building HVAC are representative of typical existing large commercial office buildings in the U.S. We selected Chicago for the building location, and we used the typical meteorological year TMY3 data as simulation input. The data set comprises normal operation data and data representative of operation under five different fault types. We use these labeled data sets for training an SVM based classifier. The five fault types are the following: constant bias in outdoor air temperature measurement (Fault 1), constant bias in supply air temperature measurement (Fault 2), constant bias in return air temperature measurement (Fault 3), offset in supply air flow rate (Fault 4), and stuck cooling coil valve (Fault 5). Table \ref{data_whole} summarizes the characteristics of the data set including fault location, intensity, type, and data length.
\begin{table*}
\begin{centering}
\label{Table_data}
\resizebox{0.7\textwidth}{!}{%
\begin{tabular}{|>{\centering}p{2.8cm}|>{\centering}p{1.6cm}|>{\centering}p{1.8cm}|>{\centering}p{1.6cm}|c|}
\hline 
\textbf{Faulty Component} & \textbf{System} & \textbf{Time of Year} & \textbf{Fault Intensity} & \textbf{Data Length}\tabularnewline
\hline
\hline 
- & Building HVAC & Jan-Dec & - & Yearly\tabularnewline
\hline 
Outdoor air temperature sensor & Mid-floor AHU & Feb, May, Aug, Nov & $\pm2,\pm4$ $^{\circ}$C & Monthly\tabularnewline
\hline 
Supply air temperature sensor & Mid-floor AHU & Aug & $-2$ $^{\circ}$C & Monthly\tabularnewline
\hline 
Return air temperature sensor & Mid-floor AHU & May-Jun & $+4$ $^{\circ}$C & Monthly\tabularnewline
\hline 
Supply air flow rate set point & Mid-floor AHU & May-Jun & $-0.1$ kg/s  & Monthly\tabularnewline
\hline 
Cooling coil valve actuator & Mid-floor AHU & Aug & $25\%$, $50\%$ & Monthly\tabularnewline
\hline 
\end{tabular}}
\par\end{centering}
\caption{Data set includes normal and fault scenarios sampled at 1 minute resolution. AHU is air handling unit.}\label{data_whole}
\end{table*}
\subsection{Application of SVM on simulated dataset}
Support vector machines are statistical classifiers originally introduced by \cite{boser1992training} and \cite{vapnik1998statistical}, later formally introduced by \cite{cristianini2000introduction}. In this subsection, we will briefly demonstrate the use of SVMs for classifying properly labeled datasets with normal and various faulty operating conditions. SVM separates a given set of binary labeled training data with a hyperplane, which is at maximum distance from each binary label. Therefore, the objective of this classification method is to find the maximal margin hyperplane for a given training data set. For our work, a linear separation is not possible (i.e., to successfully draw a line to separate faulty and normal data sets); that motivates the necessity of using a radial basis function (RBF) kernel (\cite{du2014radial}), along with finding a non-polynomial hyperplane to separate the labeled datasets.\par 
Before describing SVM classification in detail, the RBF kernel (see \cite{vapnik1998statistical}) on two samples  $x_{i}$ and $x_{j}$ is defined as
\begin{gather}
K_{ij}\triangleq K(x_{i},x_{j})=\mathrm{exp}\Bigl(-\frac{\|x_{i}-x_{j}\|^{2}}{2\sigma^{2}}\Bigr),\label{deep4}
\end{gather}
where $\|x_{i}-x_{j}\|$ denotes the square of the Euclidean distance, and $\sigma$ is a user-defined parameter, selected to be unity for this work.\par
A Scikit learning module available in Python 3.5+ is used for implementation of SVM on the building HVAC data set. Specifically, NuSVC is used with a cubic polynomial kernel function to train for normal and faulty data classification. As the nu value represents the upper bound on the fraction of training error, a range of nu values from $0.5$ to $0.9$ are tried during cross validation of the designed classifier. Table \ref{Table1} shows the confusion matrix for the designed SVM classifier, for data sets labeled ``normal" and ``fault type 1," where the true positive accuracy rate is less than $50\%$. This finding, as mentioned before, justifies the need to develop an adversarial based classifier, which uses the given normal data to create representative faulty training dataset.
\begin{center}
\captionof{table}{Confusion matrix for simulated building environment data set using SVM classifier} 
\label{Table1}
\begin{tabular}{l|l|c|c|c}
\multicolumn{2}{c}{}&\multicolumn{2}{c}{\textbf{True diagnosis}}&\\
\cline{3-4}
\multicolumn{2}{c|}{}&Normal&Fault 1&\multicolumn{1}{c}{Total}\\
\cline{2-4}
\multirow{2}{*}{\rotatebox{90}{\textbf{Prediction}}}& Normal & \cellcolor{blue!25}$43.27\%$ & $56.73\%$ & $100\%$\\
\cline{2-4}
& Fault & $52.18\%$ & \cellcolor{blue!25}$47.82\%$ & $100\%$\\
\cline{2-4}
\end{tabular}
\end{center}
\vspace{20pt}

\subsection{Data set using Gaussian noise}
In Table \ref{confustion matrix1}, the confusion matrix is shown for the GAN based autoencoder, where Gaussian noise is used as an input to GAN, for representing a training class of fault types for the GAN based autoencoder. From left to right, the values in Table \ref{confustion matrix}, 
denote true positive rate (TPR), false positive rate (FPR), false negative rate (FNR), and true negative rate (TNR). Although in Table \ref{confustion matrix1}, the normal data set gives more than $90\%$ TPR, the faulty data set gives around $40\%$ TPR. We can conclude that Gaussian noise as an initial representative of a faulty data set does not represent a completely different faulty data set from the normal data set. 
\begin{center}
\captionof{table}{Confusion matrix for building data set using Gaussian-noise generator} 
\label{confustion matrix1}
\begin{tabular}{l|l|c|c|c}
\multicolumn{2}{c}{}&\multicolumn{2}{c}{\textbf{True diagnosis}}&\\
\cline{3-4}
\multicolumn{2}{c|}{}&Normal&Fault&\multicolumn{1}{c}{Total}\\
\cline{2-4}
\multirow{2}{*}{\rotatebox{90}{\textbf{Prediction}}}& Normal & \cellcolor{blue!25}$93.10\%$ & $6.90\%$ & $100\%$\\
\cline{2-4}
& Fault & $58.40\%$ & \cellcolor{blue!25}$41.60\%$ & $100\%$\\
\cline{2-4}
\end{tabular}
\end{center}
\vspace{20pt}

\subsection{Data set using PCA and orthogonal projection}
Table \ref{confustion matrix} shows the confusion matrix for the same high fidelity building data set, where PCA is used to generate an initial representative of a fault-incorporated data set as in Section \ref{prior}. For the sake of completeness, a few other confusion matrix terms are also calculated as follows: TPR is $92.30\%$, TNR is $72.80\%$, FPR is $22.76\%$,
FNR is $27.20\%$, accuracy (ACC) is $82.55\%$, positive predictive value (PPV) is $77.24\%$, negative predictive value (NPV) is $90.43\%$, false discovery rate (FDR) is $22.76\%$, and finally, false omission rate (FOR) is $9.57\%$. Table \ref{confustion matrix} shows better classification performance, compared to both the other methods described before.
\begin{center}
\captionof{table}{Confusion matrix for building data set using PCA transformation} 
\label{confustion matrix}
\begin{tabular}{l|l|c|c|c}
\multicolumn{2}{c}{}&\multicolumn{2}{c}{\textbf{True diagnosis}}&\\
\cline{3-4}
\multicolumn{2}{c|}{}&Normal&Fault&\multicolumn{1}{c}{Total}\\
\cline{2-4}
\multirow{2}{*}{\rotatebox{90}{\textbf{Prediction}}}& Normal & \cellcolor{blue!25}$92.30\%$ & $7.70\%$ & $100\%$\\
\cline{2-4}
& Fault & $27.20\%$ & \cellcolor{blue!25}$72.80\%$ & $100\%$\\
\cline{2-4}
\end{tabular}
\end{center}
\vspace{20pt}

\begin{figure*}
\begin{center}
\includegraphics[scale=0.6]{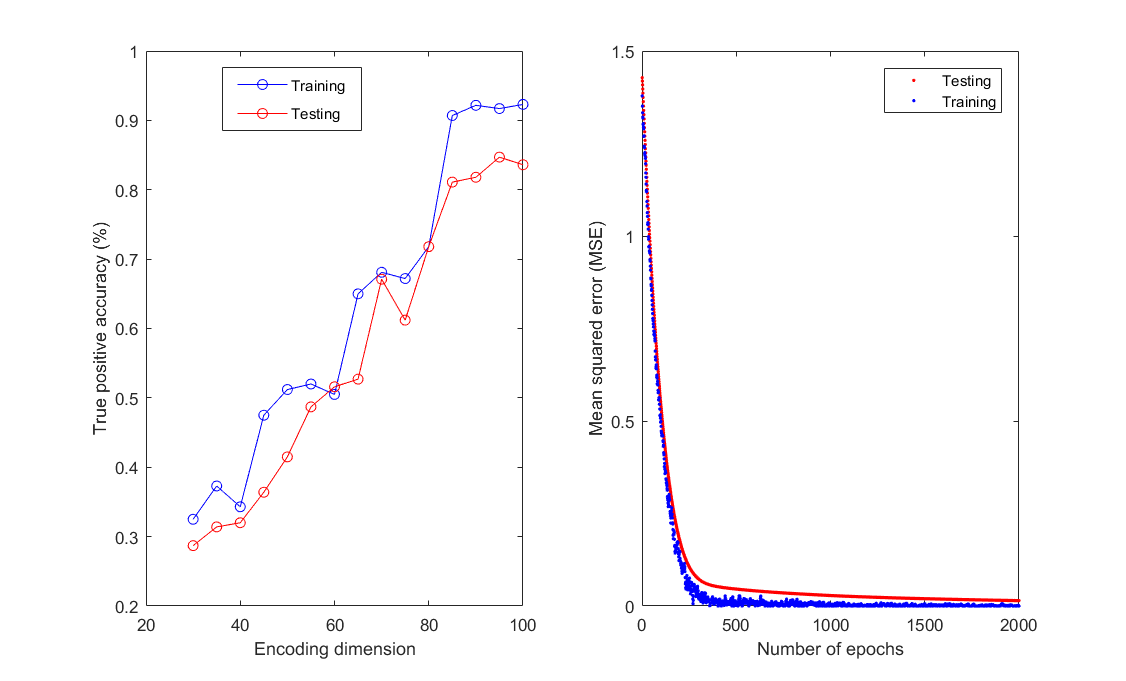}
\end{center}
\caption{Change in true positive accuracy with change in encoding dimension (left); Training and testing performance of the proposed autoencoder with encoding dimension  $100$ (right) \label{fig:fig1}}
\end{figure*}

\subsection{Comparison of results}
We demonstrated three different methods for the fault detection problem, applied to a high complexity building data set. The SVM classifier, despite using labeled data sets, gives poor TPRs for our data set. Our proposed GAN based deep autoencoder network is trained and tested using two different training approaches. First, we use a Gaussian-noise based data set as a representation of space $S_{1}$ (as in Section \ref{section2}) to train the designed GAN and simultaneously find a representative class for a data set with faults, i.e., $S_{2}^{*}$. Although the Gaussian-noise based data set gives much better TPR for the normal data set than the SVM, it performs poorly when identifying a data set with faulty conditions. Second, we use orthogonal transformation on the normal data set to generate $S_{2}$, and subsequently our proposed GAN based autoencoder is trained on this new $S_{2}$ to generate $S_{2}^{*}$. Although the orthogonal transformation based training approach gives similar TPRs for the normal data set, it gives significantly better performance for the data set with faulty conditions than the Gaussian-noise based training approach.
\subsection{Group testing}
In this section, we will do some statistical analysis of the output produced by our proposed framework. More specifically, we will do group testing in the encoding space, i.e., we will pass the generated noise and the data through the trained encoder and perform a group test. However, because we do not know the distribution of the data and noise in the encoding space, we cannot do a two-sample t-test. We will develop a group testing scheme for our purpose. Let $\left\{\mathbf{y}^1_i\right\}$ and $\left\{\mathbf{y}^2_i\right\}$ be two sets of samples in the encoding space generated using our proposed network. Let $\left\{C^1_i  :=\mathbf{y}^1_i\left(\mathbf{y}^1_i\right)^t\right\}$ and $\left\{C^2_i:=\mathbf{y}^2_i\left(\mathbf{y}^2_i\right)^t\right\}$ 
be the corresponding covariance matrices capturing the interactions among dimensions. We will identify each of the covariance matrices with the product space of Stiefel and symmetric positive definite (SPD) matrices, as proposed in \cite{bonnabel2009riemannian}. 


Now, we perform the kernel based two-sample test to find the group difference \cite{gretton2009fast} between $\left\{C^1_i\right\}$ and $\left\{C^2_i\right\}$. In order to use their formulation,
we first define the intrinsic metric we will use in this work. We will use the general linear (GL)-invariant metric for SPD matrices, which is defined as follows: Given $X, Y$ as two SPD matrices, the distance, $d(X,Y) = \sqrt{\text{trace}\left(\left(Log\left(X^{-1}Y\right)\right)^2\right)}$. For the Stiefel manifold, we will use the canonical metric \cite{kaneko2013empirical}. On the product space, 
we will use the $\ell_1$ norm as the product metric. As the kernel, we will use the Gaussian RBF, which is defined as follows: Given $C_1=(A, X)$ and $C_2=(B, Y)$ as two points on the product space, 
the kernel, $k\left(C_1, C_2\right):= 
\exp\left(-d^2\left(C_1,C_2\right)\right)$. Here, $d$ is the product metric. Given $\left\{C^1_i\right\}_{i=1}^{N_1}$ and $\left\{C^2_i\right\}_{i=1}^{N_2}$, the maximum mean discrepancy (MMD) is defined as follows:
\begin{align}
\text{MMD}\left(\left\{C^1_i\right\}, \left\{C^2_i\right\}\right)^2 = \frac{1}{N_1^2} \sum_{i,j}k\left(C^1_i,C^1_j\right)-\nonumber \\ \frac{2}{N_1N_2}\sum_{i,j}k\left(C^1_i,C^2_j\right)+\frac{1}{N_2^2} \sum_{i,j}k\left(C^2_i,C^2_j\right)
\end{align}

For a level $\alpha$ test, we reject the null hypothesis $H_0 = \left\{\right.$samples from the two groups are from same distribution$\left.\right\}$ if $MMD < 2 \sqrt{1/\max{N_1,N_2}} \left(1+\sqrt{-\log \alpha}\right)$. Finally, we conclude from the experiments that for our proposed framework, we reject the null hypothesis with $95\%$ confidence. 

\section{Conclusion}
\label {conclusion}
A novel GAN based autoencoder is introduced in this paper. This proposed network performs very well when compared to an SVM based classifier. Although the SVM classifier uses labeled training data for classification, it still gives less than $50\%$ TPR for our high complexity simulated data set. On the other hand, the proposed GAN based deep autoencoder gives significantly better performance for two different types of training scenarios. The proposed GAN based autoencoder is initially trained on a random Gaussian data set. Next, orthogonal projection is used to generate a data set that is perpendicular to the given normal data set. This orthogonally projected data set is used as an initial fault-incorporated data set for our proposed GAN based autoencoder for training. Confusion matrices for both training scenarios are presented, and both of them perform very well compared to the SVM based classification approach. Finally, a statistical group test demonstrates that our encoded normal and GAN based fault-incorporated data spaces (i.e., data sets in $S_{0}^{*}$ and $S_{2}^{*}$ spaces, respectively) are statistically different, and subsequently validates the favorable performance of our proposed network.

\bibliographystyle{unsrt}

\bibliography{main.bib}

\begin{thebibliography}{10}

\bibitem{patton1991fault}
RJ~Patton.
\newblock Fault detection and diagnosis in aerospace systems using analytical
  redundancy.
\newblock {\em Computing \& Control Engineering Journal}, 2(3):127--136, 1991.

\bibitem{patton1992robust}
Ron~J Patton and Jie Chen.
\newblock Robust fault detection of jet engine sensor systems using
  eigenstructure assignment.
\newblock {\em Journal of Guidance, Control, and Dynamics}, 15(6):1491--1497,
  1992.

\bibitem{patton1994review}
Ron~J Patton and Jie Chen.
\newblock Review of parity space approaches to fault diagnosis for aerospace
  systems.
\newblock {\em Journal of Guidance, Control, and Dynamics}, 17:278--278, 1994.

\bibitem{zhang2015bayesian}
Shuo Zhang and Miroslav Bari{\'c}.
\newblock A {B}ayesian approach to hybrid fault detection and isolation.
\newblock In {\em Decision and Control (CDC), 2015 IEEE 54th Annual Conference
  on}, pages 4468--4473. IEEE, 2015.

\bibitem{dixon2000fault}
Warren~E Dixon, Ian~D Walker, Darren~M Dawson, and John~P Hartranft.
\newblock Fault detection for robot manipulators with parametric uncertainty: A
  prediction-error-based approach.
\newblock {\em IEEE Transactions on Robotics and Automation}, 16(6):689--699,
  2000.

\bibitem{murray2002resolver}
Aengus Murray, Bruce Hare, and Akihiro Hirao.
\newblock Resolver position sensing system with integrated fault detection for
  automotive applications.
\newblock In {\em Sensors, 2002. Proceedings of IEEE}, volume~2, pages
  864--869. IEEE, 2002.

\bibitem{capriglione2003line}
Domenico Capriglione, Consolatina Liguori, Cesare Pianese, and Antonio
  Pietrosanto.
\newblock On-line sensor fault detection, isolation, and accommodation in
  automotive engines.
\newblock {\em IEEE Transactions on Instrumentation and Measurement},
  52(4):1182--1189, 2003.

\bibitem{isermann2004model}
Rolf Isermann.
\newblock Model-based fault detection and diagnosis--status and applications.
\newblock {\em IFAC Proceedings Volumes}, 37(6):49--60, 2004.

\bibitem{mcintyre2004fault}
Michael~L McIntyre, Warren~E Dixon, Darren~M Dawson, and Ian~D Walker.
\newblock Fault detection and identification for robot manipulators.
\newblock In {\em Robotics and Automation, 2004. Proceedings. ICRA'04. 2004
  IEEE International Conference on}, volume~5, pages 4981--4986. IEEE, 2004.

\bibitem{hwang2010survey}
Inseok Hwang, Sungwan Kim, Youdan Kim, and Chze~Eng Seah.
\newblock A survey of fault detection, isolation, and reconfiguration methods.
\newblock {\em IEEE Transactions on Control Systems Technology},
  18(3):636--653, 2010.

\bibitem{flett2016fault}
Justin Flett and Gary~M Bone.
\newblock Fault detection and diagnosis of diesel engine valve trains.
\newblock {\em Mechanical Systems and Signal Processing}, 72:316--327, 2016.

\bibitem{firth2010simple}
Steven~K Firth, Kevin~J Lomas, and Simon~J Rees.
\newblock A simple model of {PV} system performance and its use in fault
  detection.
\newblock {\em Solar Energy}, 84(4):624--635, 2010.

\bibitem{chouder2010automatic}
A~Chouder and S~Silvestre.
\newblock Automatic supervision and fault detection of {PV} systems based on
  power losses analysis.
\newblock {\em Energy conversion and Management}, 51(10):1929--1937, 2010.

\bibitem{braun2012signal}
Henry Braun, Santoshi~T Buddha, Venkatachalam Krishnan, Andreas Spanias, Cihan
  Tepedelenlioglu, Ted Yeider, and Toru Takehara.
\newblock Signal processing for fault detection in photovoltaic arrays.
\newblock In {\em Acoustics, Speech and Signal Processing (ICASSP), 2012 IEEE
  International Conference on}, pages 1681--1684. IEEE, 2012.

\bibitem{zhao2012decision}
Ye~Zhao, Ling Yang, Brad Lehman, Jean-Fran{\c{c}}ois de~Palma, Jerry Mosesian,
  and Robert Lyons.
\newblock Decision tree-based fault detection and classification in solar
  photovoltaic arrays.
\newblock In {\em Applied Power Electronics Conference and Exposition (APEC),
  2012 Twenty-Seventh Annual IEEE}, pages 93--99. IEEE, 2012.

\bibitem{yang2013benchmarking}
Yongheng Yang, Frede Blaabjerg, and Zhixiang Zou.
\newblock Benchmarking of grid fault modes in single-phase grid-connected
  photovoltaic systems.
\newblock {\em IEEE Transactions on Industry Applications}, 49(5):2167--2176,
  2013.

\bibitem{silvestre2013automatic}
Santiago Silvestre, Aissa Chouder, and Engin Karatepe.
\newblock Automatic fault detection in grid connected {PV} systems.
\newblock {\em Solar Energy}, 94:119--127, 2013.

\bibitem{garoudja2017statistical}
Elyes Garoudja, Fouzi Harrou, Ying Sun, Kamel Kara, Aissa Chouder, and Santiago
  Silvestre.
\newblock A statistical-based approach for fault detection and diagnosis in a
  photovoltaic system.
\newblock In {\em Systems and Control (ICSC), 2017 6th International Conference
  on}, pages 75--80. IEEE, 2017.

\bibitem{ropp1999prevention}
ME~Ropp, M~Begovic, and A~Rohatgi.
\newblock Prevention of islanding in grid-connected photovoltaic systems.
\newblock {\em Progress in Photovoltaics: Research and Applications},
  7(1):39--59, 1999.

\bibitem{du2014fault}
Zhimin Du, Bo~Fan, Xinqiao Jin, and Jinlei Chi.
\newblock Fault detection and diagnosis for buildings and {HVAC} systems using
  combined neural networks and subtractive clustering analysis.
\newblock {\em Building and Environment}, 73:1--11, 2014.

\bibitem{gao2016system}
Dian-ce Gao, Shengwei Wang, Kui Shan, and Chengchu Yan.
\newblock A system-level fault detection and diagnosis method for low delta-t
  syndrome in the complex hvac systems.
\newblock {\em Applied Energy}, 164:1028--1038, 2016.

\bibitem{beard1971failure}
Richard~Vernon Beard.
\newblock {\em Failure accomodation in linear systems through
  self-reorganization.}
\newblock PhD thesis, Massachusetts Institute of Technology, 1971.

\bibitem{jones1973failure}
Harold~Lee Jones.
\newblock {\em Failure detection in linear systems.}
\newblock PhD thesis, Massachusetts Institute of Technology, 1973.

\bibitem{massoumnia1989failure}
M-A Massoumnia, George~C Verghese, and Alan~S Willsky.
\newblock Failure detection and identification.
\newblock {\em IEEE transactions on automatic control}, 34(3):316--321, 1989.

\bibitem{frank1990fault}
Paul~M Frank.
\newblock Fault diagnosis in dynamic systems using analytical and
  knowledge-based redundancy: A survey and some new results.
\newblock {\em Automatica}, 26(3):459--474, 1990.

\bibitem{patton1997observer}
RJ~Patton and J~Chen.
\newblock Observer-based fault detection and isolation: Robustness and
  applications.
\newblock {\em Control Engineering Practice}, 5(5):671--682, 1997.

\bibitem{edwards2000sliding}
Christopher Edwards, Sarah~K Spurgeon, and Ron~J Patton.
\newblock Sliding mode observers for fault detection and isolation.
\newblock {\em Automatica}, 36(4):541--553, 2000.

\bibitem{tan2003sliding}
Chee~Pin Tan and Christopher Edwards.
\newblock Sliding mode observers for robust detection and reconstruction of
  actuator and sensor faults.
\newblock {\em International Journal of Robust and Nonlinear Control},
  13(5):443--463, 2003.

\bibitem{isermann1997trends}
Rolf Isermann and Peter Balle.
\newblock Trends in the application of model-based fault detection and
  diagnosis of technical processes.
\newblock {\em Control Engineering Practice}, 5(5):709--719, 1997.

\bibitem{hermann1977nonlinear}
Robert Hermann and Arthur Krener.
\newblock Nonlinear controllability and observability.
\newblock {\em IEEE Transactions on automatic control}, 22(5):728--740, 1977.

\bibitem{de2001geometric}
Claudio De~Persis and Alberto Isidori.
\newblock A geometric approach to nonlinear fault detection and isolation.
\newblock {\em IEEE transactions on automatic control}, 46(6):853--865, 2001.

\bibitem{isidori2013nonlinear}
Alberto Isidori.
\newblock {\em Nonlinear control systems}.
\newblock Springer Science \& Business Media, 2013.

\bibitem{martinelli2017nonlinear}
Agostino Martinelli.
\newblock Nonlinear unknown input observability: The general analytic solution.
\newblock {\em arXiv preprint arXiv:1704.03252}, 2017.

\bibitem{maybeck1999multiple}
Peter~S Maybeck.
\newblock Multiple model adaptive algorithms for detecting and compensating
  sensor and actuator/surface failures in aircraft flight control systems.
\newblock {\em International Journal of Robust and Nonlinear Control},
  9(14):1051--1070, 1999.

\bibitem{elgersma2001reconfigurable}
Michael Elgersma and Sonja Glavaski.
\newblock Reconfigurable control for active management of aircraft system
  failures.
\newblock In {\em American Control Conference, 2001. Proceedings of the 2001},
  volume~4, pages 2627--2639. IEEE, 2001.

\bibitem{boskovic1999intelligent}
JD~Boskovic, S-M Li, and Raman~K Mehra.
\newblock Intelligent spacecraft control using multiple models, switching, and
  tuning.
\newblock In {\em Intelligent Control/Intelligent Systems and Semiotics, 1999.
  Proceedings of the 1999 IEEE International Symposium on}, pages 84--89. IEEE,
  1999.

\bibitem{boskovic2001line}
Jovan~D Boskovic, Sai-Ming Li, and Raman~K Mehra.
\newblock On-line failure detection and identification (fdi) and adaptive
  reconfigurable control ({ARC}) in aerospace applications.
\newblock In {\em American Control Conference, 2001. Proceedings of the 2001},
  volume~4, pages 2625--2626. IEEE, 2001.

\bibitem{bokor2009fault}
J{\'o}zsef Bokor and Zolt{\'a}n Szab{\'o}.
\newblock Fault detection and isolation in nonlinear systems.
\newblock {\em Annual Reviews in Control}, 33(2):113--123, 2009.

\bibitem{ding2009subspace}
SX~Ding, P~Zhang, A~Naik, EL~Ding, and B~Huang.
\newblock Subspace method aided data-driven design of fault detection and
  isolation systems.
\newblock {\em Journal of Process Control}, 19(9):1496--1510, 2009.

\bibitem{dong2011data}
Jianfei Dong and Michel Verhaegen.
\newblock Data driven fault detection and isolation of a wind turbine
  benchmark.
\newblock {\em IFAC Proceedings Volumes}, 44(1):7086--7091, 2011.

\bibitem{he2013least}
Xiao He, Zidong Wang, Yang Liu, and DH~Zhou.
\newblock Least-squares fault detection and diagnosis for networked sensing
  systems using a direct state estimation approach.
\newblock {\em IEEE Transactions on Industrial Informatics}, 9(3):1670--1679,
  2013.

\bibitem{lemos2013adaptive}
Andre Lemos, Walmir Caminhas, and Fernando Gomide.
\newblock Adaptive fault detection and diagnosis using an evolving fuzzy
  classifier.
\newblock {\em Information Sciences}, 220:64--85, 2013.

\bibitem{harmouche2014incipient}
Jinane Harmouche, Claude Delpha, and Demba Diallo.
\newblock Incipient fault detection and diagnosis based on kullback--leibler
  divergence using principal component analysis: Part i.
\newblock {\em Signal Processing}, 94:278--287, 2014.

\bibitem{bao2016improved}
Shiyi Bao, Lijia Luo, Jianfeng Mao, and Di~Tang.
\newblock Improved fault detection and diagnosis using sparse global-local
  preserving projections.
\newblock {\em Journal of Process Control}, 47:121--135, 2016.

\bibitem{goodfellow2014generative}
Ian Goodfellow, Jean Pouget-Abadie, Mehdi Mirza, Bing Xu, David Warde-Farley,
  Sherjil Ozair, Aaron Courville, and Yoshua Bengio.
\newblock Generative adversarial nets.
\newblock In {\em Advances in Neural Information Processing Systems}, pages
  2672--2680, 2014.

\bibitem{santurkar2017classification}
Shibani Santurkar, Ludwig Schmidt, and Aleksander Madry.
\newblock A classification-based perspective on {GAN} distributions.
\newblock {\em arXiv preprint arXiv:1711.00970}, 2017.

\bibitem{tolstikhin2017adagan}
Ilya~O Tolstikhin, Sylvain Gelly, Olivier Bousquet, Carl-Johann Simon-Gabriel,
  and Bernhard Sch{\"o}lkopf.
\newblock Adagan: Boosting generative models.
\newblock In {\em Advances in Neural Information Processing Systems}, pages
  5430--5439, 2017.

\bibitem{arjovsky2017wasserstein}
Martin Arjovsky, Soumith Chintala, and L{\'e}on Bottou.
\newblock Wasserstein gan.
\newblock {\em arXiv preprint arXiv:1701.07875}, 2017.

\bibitem{chongxuan2017triple}
LI~Chongxuan, Taufik Xu, Jun Zhu, and Bo~Zhang.
\newblock Triple generative adversarial nets.
\newblock In {\em Advances in Neural Information Processing Systems}, pages
  4091--4101, 2017.

\bibitem{lin2017softmax}
Min Lin.
\newblock Softmax {GAN}.
\newblock {\em arXiv preprint arXiv:1704.06191}, 2017.

\bibitem{kilcher2017parametrizing}
Parametrizing filters of a {CNN} with a {GAN}, author={Kilcher, Yannic and
  Becigneul, Gary and Hofmann, Thomas}, journal={arXiv preprint
  arXiv:1710.11386}, year={2017}.

\bibitem{kodali2017train}
How to train your {DRAGAN}, author={Kodali, Naveen and Abernethy, Jacob and
  Hays, James and Kira, Zsolt}, journal={arXiv preprint arXiv:1705.07215},
  year={2017}.

\bibitem{zheng2018one}
Panpan Zheng, Shuhan Yuan, Xintao Wu, Jun Li, and Aidong Lu.
\newblock One-class adversarial nets for fraud detection.
\newblock {\em arXiv preprint arXiv:1803.01798}, 2018.

\bibitem{arici2016associative}
Tarik Arici and Asli Celikyilmaz.
\newblock Associative adversarial networks.
\newblock {\em arXiv preprint arXiv:1611.06953}, 2016.

\bibitem{saatci2017bayesian}
Yunus Saatci and Andrew~G Wilson.
\newblock Bayesian {GAN}.
\newblock In {\em Advances in Neural Information Processing Systems}, pages
  3622--3631, 2017.

\bibitem{44904}
Alireza Makhzani, Jonathon Shlens, Navdeep Jaitly, and Ian Goodfellow.
\newblock Adversarial autoencoders.
\newblock In {\em International Conference on Learning Representations}, 2016.

\bibitem{ulyanov2018takes}
Dmitry Ulyanov, Andrea Vedaldi, and Victor Lempitsky.
\newblock It takes (only) two: Adversarial generator-encoder networks.
\newblock 2018.

\bibitem{46638}
Alex Kurakin, Dan Boneh, Florian Tramèr, Ian Goodfellow, Nicolas Papernot, and
  Patrick McDaniel.
\newblock Ensemble adversarial training: Attacks and defenses.
\newblock 2018.

\bibitem{46641}
Aurko Roy, Colin Raffel, Ian Goodfellow, and Jacob Buckman.
\newblock Thermometer encoding: One hot way to resist adversarial examples.
\newblock 2018.

\bibitem{shen2017ae}
Shiwei Shen, Guoqing Jin, Ke~Gao, and Yongdong Zhang.
\newblock {AE}-{GAN}: adversarial eliminating with {GAN}.
\newblock {\em arXiv preprint arXiv:1707.05474}, 2017.

\bibitem{samangouei2018defense}
Pouya Samangouei, Maya Kabkab, and Rama Chellappa.
\newblock Defense-gan: Protecting classifiers against adversarial attacks using
  generative models.
\newblock {\em arXiv preprint arXiv:1805.06605}, 2018.

\bibitem{bengio2007greedy}
Yoshua Bengio, Pascal Lamblin, Dan Popovici, and Hugo Larochelle.
\newblock Greedy layer-wise training of deep networks.
\newblock In {\em Advances in Neural Information Processing Systems}, pages
  153--160, 2007.

\bibitem{boser1992training}
Bernhard~E Boser, Isabelle~M Guyon, and Vladimir~N Vapnik.
\newblock A training algorithm for optimal margin classifiers.
\newblock In {\em Proceedings of the Fifth Annual Workshop on Computational
  Learning Theory}, pages 144--152. ACM, 1992.

\bibitem{vapnik1998statistical}
Vladimir~Naumovich Vapnik and Vlamimir Vapnik.
\newblock {\em Statistical Learning Theory}, volume~1.
\newblock Wiley New York, 1998.

\bibitem{cristianini2000introduction}
Nello Cristianini and John Shawe-Taylor.
\newblock {\em An Introduction to Support Vector Machines}.
\newblock Cambridge University Press, Cambridge, UK, 2000.

\bibitem{du2014radial}
Ke-Lin Du and MNS Swamy.
\newblock Radial basis function networks.
\newblock In {\em Neural Networks and Statistical Learning}, pages 299--335.
  Springer, 2014.

\bibitem{bonnabel2009riemannian}
Silvere Bonnabel and Rodolphe Sepulchre.
\newblock Riemannian metric and geometric mean for positive semidefinite
  matrices of fixed rank.
\newblock {\em SIAM Journal on Matrix Analysis and Applications},
  31(3):1055--1070, 2009.

\bibitem{gretton2009fast}
Arthur Gretton, Kenji Fukumizu, Zaid Harchaoui, and Bharath~K Sriperumbudur.
\newblock A fast, consistent kernel two-sample test.
\newblock In {\em Advances in Neural Information Processing systems}, pages
  673--681, 2009.

\bibitem{kaneko2013empirical}
Tetsuya Kaneko, Simone Fiori, and Toshihisa Tanaka.
\newblock Empirical arithmetic averaging over the compact {S}tiefel manifold.
\newblock {\em IEEE Transactions on Signal Processing}, 61(4):883--894, 2013.

\end{thebibliography}
\end{document}